\pdfoutput=1
\documentclass[conference]{IEEEtran}
\usepackage{booktabs}    
\usepackage{multirow}    
\usepackage{array}       
\usepackage{graphicx}    
\usepackage{amsmath}     

\usepackage{amsthm}
\theoremstyle{remark}
\newtheorem{remark}{Remark}

\newtheorem{theorem}{Theorem}
\usepackage{algorithmicx}
\usepackage{amssymb}  
\usepackage{amsfonts} 
\usepackage{algorithm}
\usepackage{algpseudocode} 
\usepackage{nicematrix} 
\begin{document}

\title{TS-ACL: Closed-Form Solution for Time Series-oriented  Continual Learning}


\author{
Jiaxu Li$^{1}$, 
Kejia Fan$^{1}$, 
Songning Lai$^{2}$, 
Linpu Lv$^{3}$,
Jinfeng Xu$^{4}$,
Jianheng Tang$^{5}$, \\
Anfeng Liu$^{1}$, 
Houbing Herbert Song$^{6}$, 
Yutao Yue$^{2}$, 
Yuanhuai Liu$^{5}$,
Huiping Zhuang$^{7}$\\
$^1$Central South University, $^2$The Hong Kong University of Science and Technology (Guangzhou), \\
$^3$Zhengzhou University,
$^4$The University of Hong Kong,
$^5$Peking University,
\\$^6$University of Maryland,
$^3$South China University of Technology
}

\maketitle
\begin{abstract}
Time series classification underpins critical applications such as healthcare diagnostics and gesture-driven interactive systems in multimedia scenarios. However, time series class-incremental learning (TSCIL) faces two major challenges: catastrophic forgetting and intra-class variations. Catastrophic forgetting occurs because gradient-based parameter update strategies inevitably erase past knowledge. And unlike images, time series data exhibits subject-specific patterns, also known as intra-class variations, which refer to differences in patterns observed within the same class. While exemplar-based methods fail to cover diverse variation with limited samples, existing exemplar-free methods lack explicit mechanisms to handle intra-class variations. To address these two challenges, we propose TS-ACL, which leverages a gradient-free closed-form solution to avoid the catastrophic forgetting problem inherent in gradient-based optimization methods while simultaneously learning global distributions to resolve intra-class variations. Additionally, it provides privacy protection and efficiency. Extensive experiments on five benchmark datasets covering various sensor modalities and tasks demonstrate that TS-ACL achieves performance close to joint training on four datasets, outperforming existing methods and establishing a new state-of-the-art (SOTA) for TSCIL.
\end{abstract}

\begin{IEEEkeywords}
Time series classification, Continual learning
\end{IEEEkeywords}

\section{Introduction}

With significant research attention in recent years, pattern recognition in time series plays a critical role in various applications, such as healthcare diagnostics \cite{healthcare}, industrial production \cite{industry}, and urban computing \cite{urban}.
Deep Learning (DL) approaches have gained widespread popularity due to their superior performance, and they typically rely on offline, static datasets that assume data to be independently and identically distributed (i.i.d.) \cite{CLPOS}.

However, as shown in Figure 1, real-world scenarios often involve continual data streams from sensors, resulting in an ever-growing volume of time series data with incremental classes, where the i.i.d. assumption no longer holds \cite{qiao2024class}.
Additionally, as new data with previously unseen classes may emerge over time, the dataset becomes more complex \cite{DT2W}.
This dynamic environment requires the DL models to continually adapt and learn from new data samples.
Unfortunately, this process is often hindered by the well-known problem of catastrophic forgetting, in which the models tend to forget previously learned knowledge when exposed to new data samples \cite{CIL_forgetting_1, CIL_survey}.

Numerous Class-Incremental Learning (CIL) methods have been proposed.
These methods can be broadly categorized into two types: exemplar-based methods \cite{ICARL, ER, FASTICARL} and exemplar-free methods \cite{LwF, MAS, DT2W}. 
exemplar-based methods store historical samples or use generative models to produce synthetic samples, called the exemplar, to enable the model to retain previously learned knowledge \cite{ICARL, ER, GR}.
While these exemplar-based methods can achieve impressive performance, they inherently raise significant concerns about privacy due to the need to store historical data \cite{ACL_1}.
More importantly, in many practical applications, particularly edge computing scenarios, the limited storage and computational resources usually render such methods impractical \cite{edge}.
On the other hand, exemplar-free methods typically aim to protect learned knowledge by modifying the loss function or optimizing the learning process without exemplar \cite{LwF, CIL_survey}.
Although these methods avoid privacy issues, they still suffer from significantly decreased accuracy with suboptimal performance, limiting their practical utility \cite{qiao2024class}.

\begin{figure}[tbp]
  \centering
  \includegraphics[width=1\linewidth]{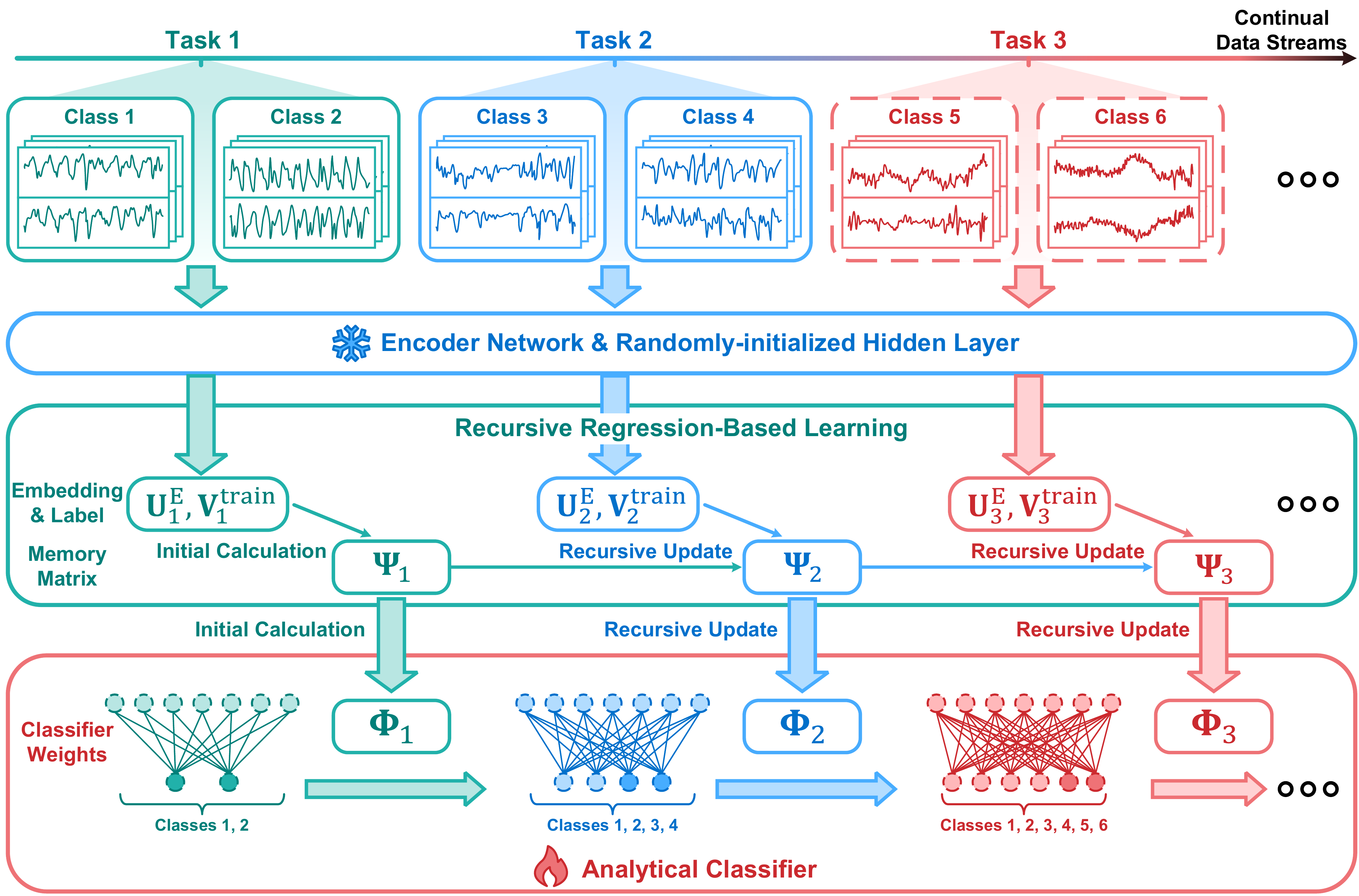}
  \caption{Schematic diagram of Time Series Class-Incremental Learning (TSCIL) and a brief TS-ACL model update illustration. As the data stream continuously arrives, the model learns new classes in each task.}
  \label{fig:1}
\end{figure}

Furthermore, a more fundamental and general limitation within current CIL methods, whether exemplar-based or exemplar-free, lies in their reliance on gradient-based backpropagation techniques.
Since gradients computed from new data can conflict with or even contradict those derived from historical data, current CIL methods fail to fully address the problem of catastrophic forgetting.
Because updates based on conflicting gradients inevitably lead to the erasure of previously acquired knowledge \cite{CIL_forgetting_1}. Additionally, intra-class variations in time series data present a significant challenge. Time series belonging to the same class may exhibit markedly different patterns depending on the individual. Addressing intra-class variations is essential for time series class-incremental learning (TSCIL) \cite{qiao2024class}. To address these two challenges, we introduce TS-ACL, as shown in Figure \ref{fig:1}.

We identify gradient descent as the root cause of catastrophic forgetting and transform each model update into a gradient-free analytical learning process with a closed-form solution. As a result, TS-ACL completely eliminates the need for gradient-based updates, fundamentally addressing the issue of catastrophic forgetting (stability), while the global distribution learned through closed-form solutions also resolves the problem of intra-class variations in time series incremental learning. By leveraging a pre-trained frozen encoder for feature extraction, TS-ACL only requires lightweight recursive updates to the analytical classifier for each task. Furthermore, to enhance the accuracy and robustness of the model when handling time series data, our approach introduces a multi-scale feature fusion technique and randomly maps features into a high-dimensional space. This method integrates features from different layers, enabling a more comprehensive capture of the complex patterns inherent in time series data, thereby improving model performance (plasticity). In this way, TS-ACL simultaneously achieves stability, plasticity, privacy protection, and lightweight resource consumption.

The key contributions of this work are as follows: 
\begin{enumerate} 
\item We propose TS-ACL, which leverages the closed-form solution through recursive ridge regression, addressing the issues of catastrophic forgetting and intra-class variations in TSCIL.

\item We then enhance the separability of time series classes by designing multi-scale feature fusion and mapping them into a high-dimensional space.

\item Extensive experiments demonstrate that TS-ACL achieves state-of-the-art performance across five benchmark datasets outperforming current methods.
\end{enumerate}

\begin{figure*}[htbp]
  \centering
  \includegraphics[width=1.0\linewidth]{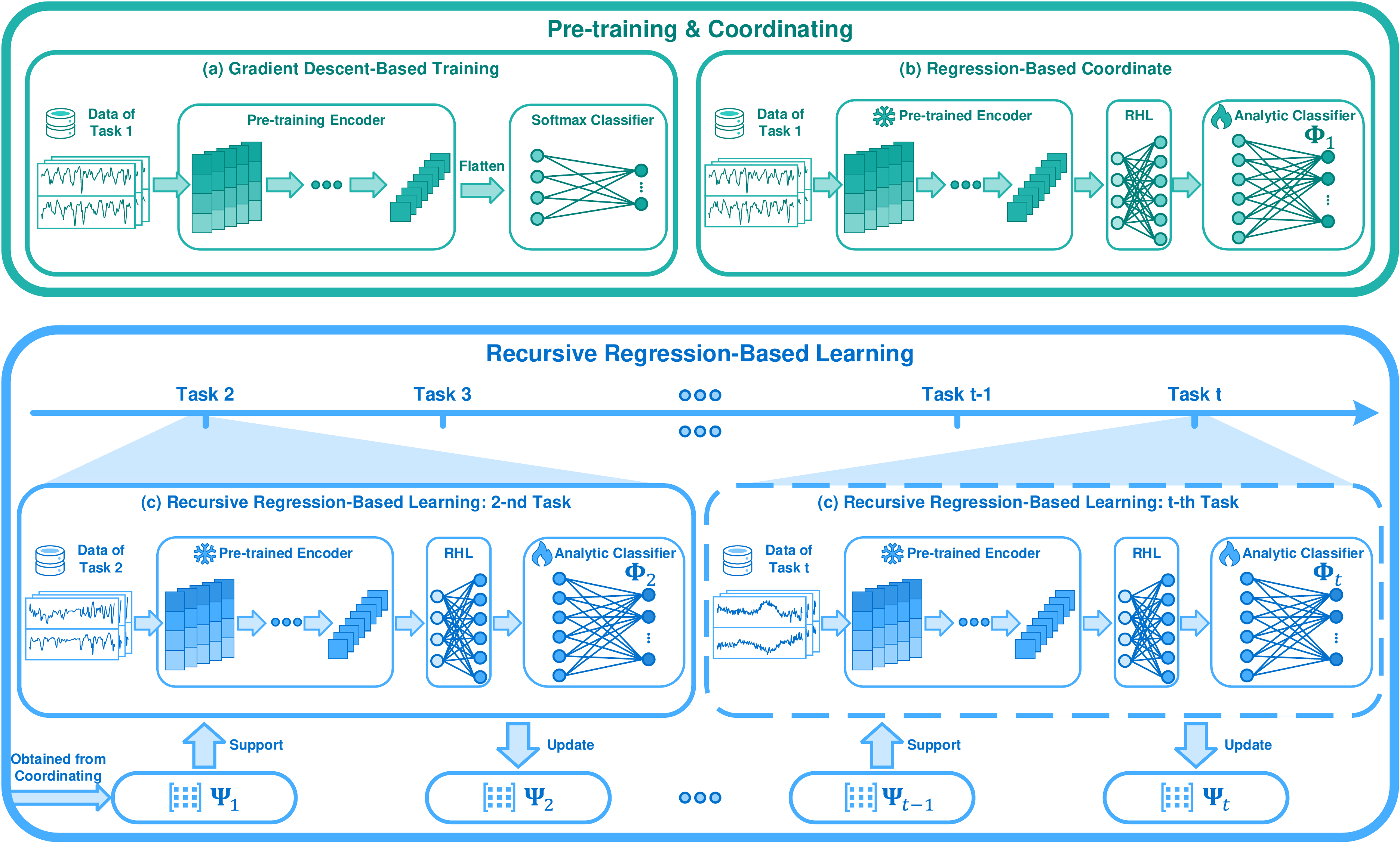}
  \caption{Overview of the proposed TS-ACL. In (a), a pre-trained encoder is first obtained on Task 1 through Gradient Descent-Based Training. Next, as shown in (b), a Regression-Based Coordinate module with an RHL is introduced before the classification head to enhance the feature dimension, resulting in $\mathbf{\Psi_1}$ from Task 1 data. Finally, in (c), a Recursive Regression-Based Learning process is applied  across tasks.}
  \label{fig:overview}
\end{figure*}

\section{Related Work}

\noindent\textbf{Time Series Classification.}
Time series classification aims to assign categorical labels to time series data based on its patterns or characteristics, playing an increasingly important role in numerous domains \cite{healthcare, industry, TSC_survey_1}.
In the early stages of research, the focus was primarily on non-DL methods, which mainly included distance-based methods \cite{TSC_distance_1, TSC_distance_2} and ensembling methods \cite{TSC_ensembling_1, TSC_ensembling_2}.
The distance-based methods rely on various time series distance measurement techniques, with the Dynamic Time Warping as a representative example \cite{DTW}, using 1-nearest neighbor classifiers.
The ensembling methods integrate multiple individual 1-nearest neighbor classifiers with different distance metrics to achieve improved performance \cite{TSC_ensembling_1, TSC_survey_1}.
These non-DL methods typically incur significant computational overhead and struggle with scalability when applied to large-scale datasets \cite{TSC_survey_1, TSC_survey_2}.

Recently, DL-based methods have been extensively studied and have achieved encouraging performance \cite{TSC_survey_2, DT2W, TSC_DL_1}. 
These methods are typically trained on offline, static, and i.i.d. datasets, with little consideration given to scenarios involving continual data perception in real-world environments \cite{DT2W}.
In practice, new time series data samples are continually collected over time, necessitating incremental model updates \cite{DT2W, CIL_survey}.
Unfortunately, classical DL-based methods inevitably face the problem of catastrophic forgetting, where the models forget previously learned knowledge when exposed to new data samples \cite{CIL_forgetting_1, CIL_survey}.

\noindent\textbf{Class-incremental Learning.}
To alleviate the problem of catastrophic forgetting, numerous studies on CIL \cite{CIL_survey, LwF, FASTICARL} have emerged and can be broadly categorized into two main types: exemplar-based methods \cite{ICARL, ER} and exemplar-free methods \cite{LwF, MAS, DT2W}.
exemplar-based methods store a subset of historical samples, called the exemplar, and replay them during incremental training to alleviate the problem of catastrophic forgetting \cite{ER, GR}.
Meanwhile, exemplar-free methods attempt to preserve prior knowledge by incorporating additional terms into the loss function or by explicitly designing and manipulating the optimization process \cite{MAS, DT2W}.

In practice, exemplar-based methods typically achieve better performance thanks to the exemplar, but they inherently raise significant concerns about privacy \cite{ACL_1}.
In many practical applications, particularly edge computing scenarios, the limited storage and computational resources usually render such methods impractical \cite{edge}.
Meanwhile, although exemplar-free methods can avoid privacy issues, they typically suffer from significantly suboptimal performance \cite{ACL_2, CIL_survey}.
More critically, both these methods depend on gradient-based updating techniques, which fundamentally do not solve the problem of catastrophic forgetting \cite{CIL_forgetting_1}.

\noindent\textbf{Time Series Class-Incremental Learning.}
DT2W \cite{DT2W}, CLOPS \cite{CLPOS}, and FastICARL \cite{FASTICARL} are recently proposed methods for handling time series class-incremental learning (TSCIL). DT2W is a knowledge distillation-based approach that uses Soft-DTW (Dynamic Time Warping) to align feature maps, making it particularly suitable for multivariate time series classification. It appears to effectively balance the learning of new and old knowledge, reducing forgetting. CLOPS is designed for cardiac arrhythmia diagnosis, employing an experience replay strategy that manages memory buffers through importance storage and uncertainty retrieval. FastICARL is a fast variant of iCaRL \cite{ICARL} which enhances learning efficiency by replacing the selection strategy with KNN and quantizing compressed memory samples. However, these methods face the following challenges: 1) Data privacy: In sensitive fields like healthcare, storing raw time series data may not be feasible. 2) Intra-class variations: Time series data often exhibit significant intra-class variations, requiring methods to effectively manage these variations while learning new classes. 3) Efficiency: Due to the sequential nature of time series data, efficient online learning algorithms are crucial for real-time processing of data streams. We recognize these challenges and propose TS-ACL, which employs a closed-form solution for continual learning, perfectly addressing the aforementioned challenges and establishing a new SOTA in TSCIL.

\section{Motivation}
\subsection{Preliminaries}
Let the model be continually trained for $T$ tasks, where the training data for each task comes from different classes. Let $\mathcal{D}^{\text{train}}_t \sim \{ \mathbf{U}^{\text{train}}_t, \mathbf{V}^{\text{train}}_t \}$ and $\mathcal{D}^{\text{test}}_t \sim \{ \mathbf{U}^{\text{test}}_t, \mathbf{V}^{\text{test}}_t \}$ denote the training and testing datasets at task $t$ ($t = 1, \dots, T$). Specifically, $\mathbf{U}_t^{\text{train}} \in \mathbb{R}^{N_t \times c \times l}$ (e.g., $N_t$ time series with a shape of $c \times l$) and $\mathbf{V}^{\text{train}}_t \in \mathbb{R}^{N_t \times d_{y_t}}$ (with task $t$ containing $d_{y_t}$ classes) represent the stacked input and label tensors. For a feature extractor $ f() $ with $ K $ layers, given any sample $ u \in \mathcal{U} $, the feature extracted by the $ k $-th layer of $ f() $ is denoted as $ f_k(u) \in \mathbb{R}^{c_k \times l_k} $. Correspondingly, the feature from the final layer is denoted as $ f_K(u) \in \mathbb{R}^{c_K \times l_K} $.

In the CIL scenario, the classes learned across different tasks are independent and non-overlapping \cite{CIL_survey, qiao2024class}. For each task $t$, the network learns from $\mathcal{D}^{\text{train}}_t$, where the classes differ from those in previous tasks. In task $t$, the data labels $\mathbf{V}^{\text{train}}_t$  belong to a task-specific class set $\mathcal{C}_t$, disjoint from other sets $\mathcal{C}_{t'}$ (for $t' \neq t$). Thus, data from different tasks exhibit distinct class distributions. For each task $t$, the objective is to use previous parameters $\Theta_{t-1}$ and $\mathcal{D}^{\text{train}}_t$ to update the model to $\Theta_t$, ensuring both stability (retaining past knowledge) and plasticity (learning new knowledge).

\subsection{Addressing Catastrophic Forgetting}
The first phase of TS-ACL involves training an encoder network $f(\cdot)$ on the initial task using gradient descent optimization. A key insight of our approach is that catastrophic forgetting primarily occurs due to conflicting gradients between sequential tasks. When the data distributions between tasks are inconsistent, gradient updates from new tasks can erase knowledge learned from previous tasks \cite{CIL_forgetting_1}.

To fundamentally address this issue, we propose to:
1) Train the encoder only on the initial task to learn general feature extraction capabilities
2) Freeze the encoder weights for all subsequent tasks
3) Adopt a gradient-free approach through recursive ridge regression for incremental learning

This design eliminates the root cause of catastrophic forgetting by avoiding gradient updates after the initial training. The frozen encoder maintains stable feature extraction, while ridge regression enables analytical updates that do not interfere with previously learned knowledge.

Specifically, we train the encoder in task 1 using cross-entropy loss:
\begin{equation}
\min_{\Theta_{\text{encoder}}} \mathcal{L}_{\text{CE}}(f(\mathbf{U}^{\text{train}}_1; \Theta_{\text{encoder}}), \mathbf{V}^{\text{train}}_1).
\end{equation}
After training, we freeze the encoder weights for all subsequent tasks, ensuring stable feature extraction.

\subsection{Handling Intra-Class Variations}

Time series data presents unique challenges in handling intra-class variations \cite{qiao2024class}. As shown in Figure~\ref{fig:tsne}, the same class (such as the same activity) exhibits distinctly different temporal patterns when performed by different subjects, leading to clear sub-clusters within classes.

Consider a simple scenario: learning two consecutive tasks from the same group of subjects $S = \{s_1, \ldots, s_k\}$ with prior distribution $p(S) = \sum_{s \in S} p(S = s)$. The input distribution conditioned on subject $s$ can be expressed as $p(\mathbf{U}) = \sum_{s \in S} p(\mathbf{U}|S = s)p(S = s)$, where $\mathbf{U}$ represents the time series. For two tasks with incremental learning distributions $p(\mathbf{U}_1, \mathbf{V}_1)$ and $p(\mathbf{U}_2, \mathbf{V}_2)$, the learning objective is:

\begin{align}
\max & \sum_{s \in S} p_1(s) \sum_{(\mathbf{u}_1, \mathbf{v}_1) \in \mathcal{D}_1^{train}} \log p(\mathbf{v}_1|\mathbf{u}_1, s) \nonumber \\ 
 + &\sum_{s \in S} p_2(s) \sum_{(\mathbf{u}_2, \mathbf{v}_2) \in \mathcal{D}_2^{train}} \log p(\mathbf{v}_2|\mathbf{u}_2, s).
\end{align}

Traditional exemplar-based methods (such as ER) often fail to maintain $p_1(s) = p(s)$ when managing memory samples, leading to distribution shift in replay data. This is particularly problematic for time series data, as stored samples may not adequately represent the complete temporal variations and subject-specific patterns within each class. Meanwhile, exemplar-free methods in continual learning for time-series data attempt to preserve knowledge by constraining model parameters, but they lack effective mechanisms to handle intra-class variations.

\begin{figure}[htbp]
\centering
\includegraphics[width=0.4\textwidth]{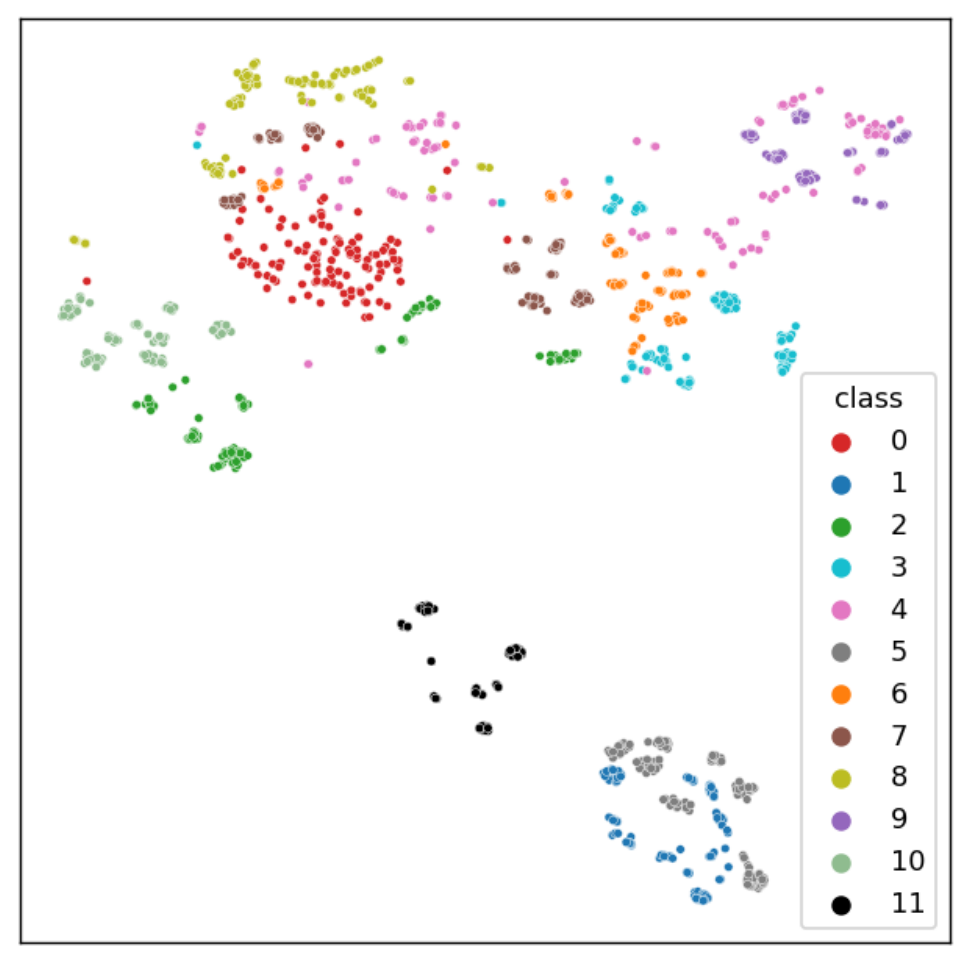} 
\caption{t-SNE visualization of DSA dataset features. Different colors represent different classes, and it can be clearly seen that a single class contains many sub-clusters.}
\label{fig:tsne}
\end{figure}

Now consider the standard ridge regression problem, with the objective function:
\begin{equation}
\min_{\mathbf{\Phi}} \| \mathbf{V^{\text{train}}} - \mathbf{U}^{\text{feat(K)}} \mathbf{\Phi} \|_F^2 + \gamma \| \mathbf{\Phi} \|_F^2,
\end{equation}
where $\mathbf{U}^{\text{feat(K)}}$ and $\mathbf{V^{\text{train}}}$ represent the complete feature matrix and stacked one-hot label matrix, respectively. This method can learn the complete distribution of the data, effectively addressing the intra-class variations in time series data. However, in the context of continual learning, accessing all historical data is not feasible. Therefore, we designed a recursive ridge regression framework that achieves effects equivalent to joint training through closed-form solutions, learning the complete data distribution at once and achieving the same effect as standard ridge regression. Below we detail our recursive ridge regression framework.

\section{Method}
\subsection{Regression-Based Coordinate}

First of all, we introduce ridge regression learning, which is divided into three steps.
The first step involves extracting the feature matrix (denoted as $ \mathbf{U}^{\text{feat(K)}}_1  $) by passing the input tensor $ \mathbf{U}^{\text{train}}_1 $ through the trained encoder network, followed by an average pooling operation, i.e.,
\begin{equation}
\mathbf{U}^{\text{feat(K)}}_1 = \text{AvgPool}(f(\mathbf{U}^{\text{train}}_1)),
\label{eq:2}
\end{equation}
where $ \mathbf{U}^{\text{feat(K)}}_1 \in \mathbb{R}^{N_1 \times c_{K}} $.

However, relying solely on the features from the final layer may be insufficient. Therefore, we further incorporate more general features from the preceding layers by establishing a skip connection. Correspondingly, we represent the concatenated features as 
\begin{equation}
\mathbf{U}^{\text{stack}}_1 = \text{Concat}(\mathbf{U}^{\text{feat(1)}}_1, \cdots, \mathbf{U}^{\text{feat(K-1)}}_1, \mathbf{U}^{\text{feat(K)}}_1).
\end{equation}

\begin{remark}
Building upon our understanding of multi-scale temporal patterns, our feature concatenation strategy integrates features from different network layers to capture complementary characteristics.
\end{remark}

    Next, instead of directly mapping the features to the classification output via a single classifier layer, we add a Randomly-initialized Hidden Layer (RHL) and use ReLU as activation. Specifically, the feature $ \mathbf{U}^{\text{stack}}_1 $ is expanded into $ \mathbf{U}_{\text{E}} $ as follows:
    \begin{equation}
    \mathbf{U}^{\text{E}}_1 = \text{ReLU}(\mathbf{U}^{\text{stack}}_1 \mathbf{\Phi}^{\text{E}}),
    \label{eq:3}
    \end{equation}
    where $ \mathbf{U}^{\text{E}}_1 \in \mathbb{R}^{N_1 \times d_{\text{E}}} $, with $ d_{\text{E}} $ representing the feature expansion size (generally $ d_{\text{stack}} < d_{\text{E}} $). Here, $ \mathbf{\Phi}^{\text{E}} $ is the matrix used to expand the features extracted from the encoder. The matrix $ \mathbf{\Phi}^{\text{E}} $ is initialized by sampling its elements from a normal distribution.
     
    \begin{remark}
    The necessity of the RHL layer lies in the fact that nonlinearly mapping features to a high-dimensional space can increase the probability of features being linearly separable \cite{RHL}.
    \end{remark}
    To further enhance the performance of our method, we employ a straightforward ensemble learning approach, which we refer to as TS-ACL-E. We initialize the RHL layer with different random seeds. Let $ n $ denote the number of distinct RHL layers. We transform the output of each model into probabilities using the softmax function, then aggregate these probabilities, and finally predict the class with the highest cumulative probability. This process can be formally expressed as:
    
    \begin{equation}
    \hat{v} = \arg\max_m\frac{1}{n}\sum_{i=1}^n P_i(v=m\mid u),
    \end{equation}
    where $\hat{v}$ represents the predicted class, $P_i(v=m\mid u)$ denotes the probability of class $m$ given input $u$ from the $i^{th}$ model, and $n$ is the total number of models with different initializations.
    
    Finally, the expanded embeddings $ \mathbf{U}^{\text{E}}_1 $ are mapped to the stacked one-hot label matrix $ \mathbf{V}_1^{\text{train}} $ using a ridge regression, by solving the following optimization problem:
    \begin{equation}
    \arg\min_{\mathbf{\Phi}_{1}} (\| \mathbf{V}_1^{\text{train}} - \mathbf{U}^{\text{E}}_1 \mathbf{\Phi}_{1} \|_\text{F}^2 + \gamma \| \mathbf{\Phi}_{1} \|_\text{F}^2),
    \end{equation}
    where $ \gamma $ is a regularization parameter and $\left\lVert\cdot\right\lVert_{\text{F}}$ is the Frobenius norm. The solution to this problem is given by:
    \begin{equation}
    \begin{aligned}
    \widehat{\mathbf{\Phi}}_{1} = \left( \mathbf{U}^{(\text{E})\top}_1 \mathbf{U}^{\text{E}}_1 + \gamma \mathbf{I} \right)^{-1} \mathbf{U}^{(\text{E})\top}_1 \mathbf{V}_1^{\text{train}},
    \label{eq:5}
    \end{aligned}
    \end{equation}
    where the notation $^\top$ is the transpose operation and the notation $^{-1}$ represents the inverse operation.

    \subsection{Recursive Regression-Based Learning}
    The previous section introduced the process of ridge regression learning, which, however, is not suitable for continual learning. In the following, we present recursive ridge regression learning adapted for continual learning as shown in Figure \ref{fig:overview}. To demonstrate this, without loss of generality, assume we are given $ \mathcal{D}^{\text{train}}_{1}, \dots, \mathcal{D}^{\text{train}}_{t-1} $, and let
    \begin{equation}
    \begin{aligned}
    \mathbf{U}_{1:t-1}^{\text{E}} \in \mathbb{R}^{N_{1:t-1} \times d_{\text{E}}}, \quad \mathbf{V}_{1:t-1}^\text{train} \in \mathbb{R}^{N_{1:t-1} \times \sum_{i=1}^{t-1} d_{y_i}}
    \end{aligned}
    \end{equation}
    represent the accumulated embeddings and label tensors, respectively, from task 1 to $ t-1 $. Here, $ N_{1:t-1} $ indicates the total number of data samples from task 1 to $ t-1 $. The sparse structure of $ \mathbf{V}_{1:t-1}^\text{train} $ arises because of the mutually exclusive classes across tasks. The learning problem can then be formulated as follows:
    \begin{equation}
    \begin{aligned}
    \arg \min_{\mathbf{\Phi}_{t-1}} \left\| \mathbf{V}_{1:t-1}^\text{train} - \mathbf{U}_{1:t-1}^{\text{E}} \mathbf{\Phi}_{t-1} \right\|_F^2 + \gamma \left\| \mathbf{\Phi}_{t-1} \right\|_F^2,
    \end{aligned}
    \end{equation}
    according to \ref{eq:5}, at task $ t-1 $, we have:
    \begin{equation}
    \begin{aligned}
    \hat{\mathbf{\Phi}}_{t-1} = \left( \mathbf{U}_{1:t-1}^{(\text{E})\top} \mathbf{U}_{1:t-1}^{\text{E}} + \gamma \mathbf{I} \right)^{-1} \mathbf{U}_{1:t-1}^{(\text{E})\top} \mathbf{V}_{1:t-1}^\text{train},
    \end{aligned}
    \label{eq:9}
    \end{equation}
    where $ \hat{\mathbf{\Phi}}_{t-1} \in \mathbb{R}^{d_{\text{E}} \times \sum_{i=1}^{t-1} d_{y_i}} $, with the column size expanding as $ t $ increases. Let
    \begin{equation}
    \begin{aligned}
    \mathbf{\Psi}_{t-1} = \left( \mathbf{U}_{1:t-1}^{(\text{E})\top} \mathbf{U}_{1:t-1}^{\text{E}} + \gamma \mathbf{I} \right)^{-1}
    \end{aligned}
    \label{eq:10}
    \end{equation}
    denote the aggregated temporal inverse correlation matrix, which captures the correlation information from both current and past samples. Based on this, our goal is to compute $ \hat{\mathbf{\Phi}}_{t} $ using only $ \hat{\mathbf{\Phi}}_{t-1} $, $ \mathbf{\Psi}_{t-1} $, and the current task's data $ \mathbf{U}^{\text{train}}_t $, without involving historical samples such as $ \mathbf{U}_{1:t-1}^{\text{E}} $. The process is formulated in the following theorem.

    \begin{theorem}
    
    \label{theorem:main}
    The $\mathbf{\Phi}$ weights, recursively obtained by
    \begin{align}
    \hat{\mathbf{\Phi}}_t = \left[ \hat{\mathbf{\Phi}}_{t-1} - \mathbf{\Psi}_{t} \mathbf{U}_{t}^{(\text{E})\top} \mathbf{U}_{t}^{\text{E}} \hat{\mathbf{\Phi}}_{t-1} \quad \mathbf{\Psi}_t \mathbf{U}_{t}^{(\text{E})\top} \mathbf{V}^{\text{train}}_t \right]
    \label{eq:11}
    \end{align}
    are equivalent to those obtained from Eq.(\ref{eq:9}) for task $ t $. The matrix $ \mathbf{\Psi}_t $ can also be recursively updated by
    \begin{equation}
    \mathbf{\Psi}_t = \mathbf{\Psi}_{t-1} - \mathbf{\Psi}_{t-1} \mathbf{U}_{t}^{(\text{E})\top} \left( \mathbf{I} + \mathbf{U}_{t}^{\text{E}} \mathbf{\Psi}_{t-1} \mathbf{U}_{t}^{(\text{E})\top} \right)^{-1} \mathbf{U}_{t}^{\text{E}} \mathbf{\Psi}_{t-1}.
    \label{eq:12}
    \end{equation}
    \end{theorem}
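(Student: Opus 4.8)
The plan is to prove the two recursions independently, both reducing to elementary rank-update identities once the structure coming from the disjoint class sets is made explicit.

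For the update of $\mathbf{\Psi}_t$ in Eq.~(\ref{eq:12}), I would first note that stacking the task-$t$ embeddings under the accumulated ones gives $\mathbf{U}_{1:t}^{(\text{E})\top}\mathbf{U}_{1:t}^{\text{E}} = \mathbf{U}_{1:t-1}^{(\text{E})\top}\mathbf{U}_{1:t-1}^{\text{E}} + \mathbf{U}_t^{(\text{E})\top}\mathbf{U}_t^{\text{E}}$, hence by Eq.~(\ref{eq:10}) we have $\mathbf{\Psi}_t^{-1} = \mathbf{\Psi}_{t-1}^{-1} + \mathbf{U}_t^{(\text{E})\top}\mathbf{U}_t^{\text{E}}$. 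Applying the Woodbury matrix identity $(\mathbf{A}+\mathbf{U}\mathbf{C}\mathbf{V})^{-1} = \mathbf{A}^{-1} - \mathbf{A}^{-1}\mathbf{U}(\mathbf{C}^{-1}+\mathbf{V}\mathbf{A}^{-1}\mathbf{U})^{-1}\mathbf{V}\mathbf{A}^{-1}$ with $\mathbf{A}=\mathbf{\Psi}_{t-1}^{-1}$, $\mathbf{U}=\mathbf{U}_t^{(\text{E})\top}$, $\mathbf{C}=\mathbf{I}$, and $\mathbf{V}=\mathbf{U}_t^{\text{E}}$ yields Eq.~(\ref{eq:12}) directly. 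The ridge term $\gamma\mathbf{I}$ is absorbed into $\mathbf{\Psi}_{t-1}^{-1}$ from the start and is never manipulated separately, which is exactly what keeps the recursion exemplar-free.

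For the weight recursion Eq.~(\ref{eq:11}), I would use that, since $\mathcal{C}_t$ is disjoint from $\mathcal{C}_1,\dots,\mathcal{C}_{t-1}$, appending task $t$ widens the one-hot label matrix by exactly $d_{y_t}$ new columns that vanish on all rows of tasks $1,\dots,t-1$, while the earlier columns vanish on the task-$t$ rows; together with the vertical stacking of the embeddings this gives the block form
\[
\mathbf{U}_{1:t}^{\text{E}} = \begin{bmatrix}\mathbf{U}_{1:t-1}^{\text{E}}\\ \mathbf{U}_t^{\text{E}}\end{bmatrix},\qquad
\mathbf{V}_{1:t}^{\text{train}} = \begin{bmatrix}\mathbf{V}_{1:t-1}^{\text{train}} & \mathbf{0}\\ \mathbf{0} & \mathbf{V}_t^{\text{train}}\end{bmatrix}.
\]
Substituting into the task-$t$ form of Eq.~(\ref{eq:9}), $\hat{\mathbf{\Phi}}_t = \mathbf{\Psi}_t\mathbf{U}_{1:t}^{(\text{E})\top}\mathbf{V}_{1:t}^{\text{train}}$, and expanding the product gives $\hat{\mathbf{\Phi}}_t = \big[\,\mathbf{\Psi}_t\mathbf{U}_{1:t-1}^{(\text{E})\top}\mathbf{V}_{1:t-1}^{\text{train}}\;\;\mathbf{\Psi}_t\mathbf{U}_t^{(\text{E})\top}\mathbf{V}_t^{\text{train}}\,\big]$. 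The second column block already coincides with the claimed expression, so it remains to rewrite the first. Using Eq.~(\ref{eq:9}) at task $t-1$, i.e.\ $\mathbf{U}_{1:t-1}^{(\text{E})\top}\mathbf{V}_{1:t-1}^{\text{train}} = \mathbf{\Psi}_{t-1}^{-1}\hat{\mathbf{\Phi}}_{t-1}$, the first block becomes $\mathbf{\Psi}_t\mathbf{\Psi}_{t-1}^{-1}\hat{\mathbf{\Phi}}_{t-1}$. Finally, from $\mathbf{\Psi}_{t-1}^{-1} = \mathbf{\Psi}_t^{-1} - \mathbf{U}_t^{(\text{E})\top}\mathbf{U}_t^{\text{E}}$ we obtain $\mathbf{\Psi}_t\mathbf{\Psi}_{t-1}^{-1} = \mathbf{I} - \mathbf{\Psi}_t\mathbf{U}_t^{(\text{E})\top}\mathbf{U}_t^{\text{E}}$, turning the first block into $\hat{\mathbf{\Phi}}_{t-1} - \mathbf{\Psi}_t\mathbf{U}_t^{(\text{E})\top}\mathbf{U}_t^{\text{E}}\hat{\mathbf{\Phi}}_{t-1}$, which completes the identity.

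The only step I expect to need real care is justifying the block-diagonal structure of $\mathbf{V}_{1:t}^{\text{train}}$ and the resulting vanishing of the cross terms in $\mathbf{U}_{1:t}^{(\text{E})\top}\mathbf{V}_{1:t}^{\text{train}}$; this is where the mutually exclusive classes across tasks assumption is actually used, and an explicit statement of the one-hot encoding convention makes it transparent. Everything else is routine rank-update algebra, requiring only that $\gamma>0$ so that every inverse $\mathbf{\Psi}_t = (\mathbf{U}_{1:t}^{(\text{E})\top}\mathbf{U}_{1:t}^{\text{E}}+\gamma\mathbf{I})^{-1}$ is well defined.
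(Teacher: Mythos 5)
Your proposal is correct, and it shares the paper's overall skeleton --- the same block decomposition of $\mathbf{U}_{1:t}^{\text{E}}$ and the block-diagonal $\mathbf{V}_{1:t}^{\text{train}}$ arising from disjoint class sets, and the same Woodbury-based recursion for $\mathbf{\Psi}_t$ --- but it diverges in the one step that actually requires work: simplifying the left column block $\mathbf{\Psi}_t\mathbf{U}_{1:t-1}^{(\text{E})\top}\mathbf{V}_{1:t-1}^{\text{train}}$. The paper substitutes the Woodbury expansion of $\mathbf{\Psi}_t$ into that block and then grinds through several lines using the auxiliary identity $\left(\mathbf{I}+\mathbf{P}\right)^{-1}=\mathbf{I}-\left(\mathbf{I}+\mathbf{P}\right)^{-1}\mathbf{P}$ with $\mathbf{P}=\mathbf{U}_t^{\text{E}}\mathbf{\Psi}_{t-1}\mathbf{U}_t^{(\text{E})\top}$ to reassemble $\mathbf{\Psi}_t$ at the end (and as printed, some intermediate signs in that chain do not track cleanly). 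You instead observe that $\mathbf{U}_{1:t-1}^{(\text{E})\top}\mathbf{V}_{1:t-1}^{\text{train}}=\mathbf{\Psi}_{t-1}^{-1}\hat{\mathbf{\Phi}}_{t-1}$ and that $\mathbf{\Psi}_t\mathbf{\Psi}_{t-1}^{-1}=\mathbf{\Psi}_t\bigl(\mathbf{\Psi}_t^{-1}-\mathbf{U}_t^{(\text{E})\top}\mathbf{U}_t^{\text{E}}\bigr)=\mathbf{I}-\mathbf{\Psi}_t\mathbf{U}_t^{(\text{E})\top}\mathbf{U}_t^{\text{E}}$, which yields the claimed form of the left block in two lines and confines the Woodbury identity to its proper role of making $\mathbf{\Psi}_t$ computable without inverting a $d_{\text{E}}\times d_{\text{E}}$ matrix from scratch. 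Your route is cleaner and less error-prone; the paper's route has the minor virtue of expressing everything directly in terms of the data matrices rather than invoking $\mathbf{\Psi}_{t-1}^{-1}$, but since $\mathbf{\Psi}_{t-1}$ is invertible by construction (as you note, $\gamma>0$ guarantees this), nothing is lost. Your remark that the vanishing cross terms in $\mathbf{U}_{1:t}^{(\text{E})\top}\mathbf{V}_{1:t}^{\text{train}}$ is where the mutually-exclusive-classes assumption enters is exactly right and is the one point the paper uses silently.
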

    
    \begin{proof}
     See the supplementary materials.
    \end{proof}
           
    \label{AL-based class incremental learning}
    
    \subsection{Theoretical Analysis}
    \noindent\textbf{Privacy Protection.}  TS-ACL ensures data privacy in two ways: first, by eliminating the need to store historical data samples; second, by guaranteeing that historical raw data samples cannot be recovered from the $\mathbf{\Psi}$ matrix through reverse engineering.

    \noindent\textbf{Time Complexity Analysis.} In the $ t $-th incremental learning task, 
    the time complexity for updating the matrix $\mathbf{\Psi_t}$ is $\mathcal{O}(N_t^3+N_td_\text{E}^2)$, 
    while the time complexity for updating the weight matrix $\hat{\mathbf{\Phi_t}}$ is 
    $\mathcal{O}(d_\text{E}^2\sum_{i=1}^{t} d_{y_i}+
    N_td_\text{E}^2)$. Therefore, 
    the overall time complexity is $\mathcal{O}(N_t^3+d_\text{E}^2\sum_{i=1}^{t} d_{y_i}+N_td_\text{E}^2)$. 

    \noindent\textbf{Space Complexity Analysis.}
    In terms of space complexity, the $ t $-th task requires storing the matrix 
    $\mathbf{\Psi_t} \in \mathbb{R}^{d_{\text{E}} \times d_{\text{E}}}$ and the weight matrix 
    $\hat{\mathbf{\Phi_t}} \in \mathbb{R}^{d_{\text{E}} \times \sum_{i=1}^{t} d_{y_i}}$, resulting in an overall space complexity of $\mathcal{O}(d_{\text{E}}^2+d_{\text{E}}\sum_{i=1}^{t} d_{y_i})$.
    
   \label{efficiency}
    
    \section{Experiment}

    \subsection{Experiment Setup}

    \noindent\textbf{Datasets.}
    We continue previous research \cite{qiao2024class} by selecting five balanced time series datasets, each containing samples with the same length and variables, which is shown in Table \ref{tabledataset}.

\noindent\textbf{UCI-HAR} includes temporal sequences obtained from smartphone inertial sensors during the execution of six daily activities. Data were collected at a frequency of 50Hz from 30 participants of varying ages. Each input sequence consists of nine channels, covering a temporal span of 128 timesteps.

\noindent\textbf{UWave} provides over 4000 samples collected from eight individuals performing eight distinct gesture patterns. The data consists of time series captured along three accelerometer axes, with each sample having a dimensionality of 3 and spanning 315 timesteps.

\noindent\textbf{DSA} gathers motion sensor data from 19 different sports activities performed by eight volunteers. Each segment serves as a single sample, featuring data across 45 channels and 125 timesteps. To make classes be split equally, 18 activity classes were selected for experimentation.

\noindent\textbf{GRABMyo} is a large-scale Surface Electromyography (sEMG) dataset designed for hand gesture recognition. It includes signals corresponding to 16 gestures performed by 43 participants across three sessions. Recordings last five seconds, sampled from 28 channels at 2048 Hz. For experimentation, data from a single session were used and downsampled to 256 Hz. A non-overlapping sliding window of 0.5 seconds (128 timesteps) was applied to segment the signals into samples. The data were then split into training and testing sets in a 3:1 ratio, ensuring both sets include all subjects to mitigate distribution shifts across participants.

\noindent\textbf{WISDM} is a human activity recognition dataset that captures sensor data from 18 activities performed by 51 participants. Using the phone accelerometer modality, samples were generated with a non-overlapping sliding window of 200, representing 10-second time series collected at a 20Hz frequency. Similar to GRABMyo, the data were split into training and test sets at a 3:1 ratio, ensuring all participants are represented in both splits.

\noindent\textbf{CIL Setting.}
We train and evaluate on five datasets, each divided into $n$ tasks with two distinct classes per task, and randomly shuffle class order before partitioning. To evaluate the robustness, we use different random seeds to conduct five independent experiments and report the average results with standard deviation.

\noindent\textbf{Task Stream.}
Following the approach of \cite{qiao2024class}, we divide tasks into a validation stream and an experiment stream. In the UCI-HAR \cite{ismi2016k} and UWave datasets \cite{liu2009uwave}, which have fewer classes, the validation stream and experiment stream consist of 3 and 4 tasks, respectively. For other datasets with more classes, the validation stream includes 3 tasks, while the experiment stream covers the remaining tasks, as summarized in Table \ref{tabledataset}. We initially perform a grid search on the validation stream to optimize hyperparameters, then use the selected parameters to conduct experiments on the experiment stream.

        \begin{table}[htbp]
        \centering
        \caption{Summary of time series datasets used in the study, showing key characteristics including sensor channels (C), sequence length (L), data splits, class distribution, and experimental tasks.} 
        \label{tabledataset}
        \resizebox{0.45\textwidth}{!}{
        \begin{NiceTabular}{l|ccccc}
            \toprule
            Dataset & Shape $(C \times L)$ & Train Size & Test Size & \ Classes & \ Exp Tasks  \\ \midrule
            UCI-HAR \cite{ismi2016k} & $9\times128$ & 7352 & 2947 & 6 & 3 \\ 
            
            UWave \cite{liu2009uwave} & $3\times315$ & 896 & 3582 & 8 & 4\\
            
            DSA \cite{altun2010human} & $45\times125$ & 6840 & 2280 & 18 & 6\\ 
            
            GRABMyo \cite{pradhan2022multi} & $28\times128$ & 36120 & 12040 & 16 & 5\\ 
            
            WISDM \cite{weiss2019wisdm} & $3\times200$ & 18184 & 6062 & 18 & 6\\ 
            \bottomrule
      
        \end{NiceTabular}}
    \end{table}

    \begin{figure*}[htbp]
        \centering
            \includegraphics[width=\textwidth]{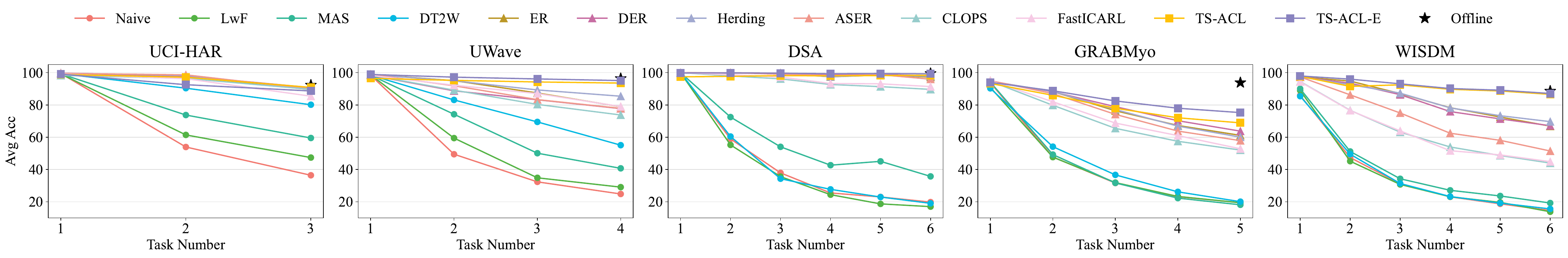} 
        \caption{The accuracy of all methods on five datasets changes as the tasks progress. "Naive" refers to the baseline method without any incremental learning strategies, while "Offline" indicates methods that are trained on the entire dataset at once.}
        \label{fig:mainfig_LN}
    \end{figure*}

        \begin{figure*}[htbp]
        \centering
            \includegraphics[width=\textwidth]{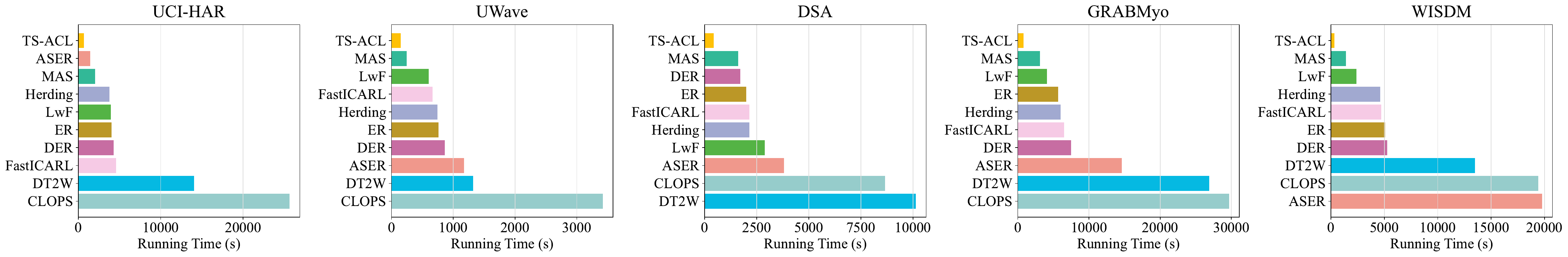} 
        \caption{Running time comparison (in seconds) across different methods on five benchmark datasets. TS-ACL consistently achieves the fastest training time across all datasets, demonstrating its superior computational efficiency. This significant reduction in computational time is attributed to TS-ACL's analytical learning approach, which requires only a single update per task after the initial training.}
        \label{fig:time_comparison}
    \end{figure*}

    \begin{table*}[htbp]
        \centering
        \caption{"Naive" refers to the baseline method without any incremental learning strategies, while "Offline" indicates methods that are trained on the entire dataset at once. For the best-performing method, we use \textbf{bold} formatting. The average value and standard deviation of each metric are reported based on the results of 5 runs.}
        \hfill

        \resizebox{\textwidth}{!}{
        \begin{NiceTabular}{c|c|cc|ccc|ccccccc|cc}
            \toprule
                         Dataset & Metric & Naive & Offline & LwF & MAS & DT$^2$W & GR& ER & DER & Herding & ASER & CLOPS & FastICARL & TS-ACL& TS-ACL-E \\ \midrule
            \multirow{2}{*}{UCI-HAR} & $\mathcal{A}_T\uparrow$ & 36.44{\tiny $\pm$10.35} & 92.31{\tiny $\pm$0.82} & 47.40{\tiny $\pm$14.04} & 59.53{\tiny $\pm$15.90} & 80.15{\tiny $\pm$6.11} & 80.04{\tiny $\pm$7.69}& 89.53{\tiny $\pm$2.41} & 90.75{\tiny $\pm$1.90} & 89.95{\tiny $\pm$2.51} & 89.82{\tiny $\pm$1.43} & 89.64{\tiny $\pm$1.50} & 85.43{\tiny $\pm$3.74} & \textbf{90.89}{\tiny $\pm$1.77} & 88.67{\tiny $\pm$5.0} \\
                           ~ & $\mathcal{F}_T\downarrow$ & 92.25{\tiny $\pm$13.25} & N.A & 74.04{\tiny $\pm$22.17} & 52.47{\tiny $\pm$27.00} & 16.55{\tiny $\pm$9.33} & 25.75{\tiny $\pm$15.74}  & 9.53{\tiny $\pm$6.54} & 8.58{\tiny $\pm$5.97} & 9.96{\tiny $\pm$7.24} & 10.20{\tiny $\pm$5.57} & 8.98{\tiny $\pm$4.82} & 17.51{\tiny $\pm$8.25} & \textbf{4.99}{\tiny $\pm$3.12} & 5.17{\tiny $\pm$2.82} \\
                           \midrule
    
            \multirow{2}{*}{UWave} & $\mathcal{A}_T\uparrow$ & 24.85{\tiny $\pm$0.12} & 96.39{\tiny $\pm$0.22} & 29.09{\tiny $\pm$5.34} & 40.74{\tiny $\pm$9.29} & 55.09{\tiny $\pm$9.27} & 85.77{\tiny $\pm$3.76} & 78.89{\tiny $\pm$4.27} & 77.74{\tiny $\pm$6.51} & 85.42{\tiny $\pm$1.89} & 77.89{\tiny $\pm$5.26} & 73.79{\tiny $\pm$3.39} & 79.01{\tiny $\pm$0.98} & 93.56{\tiny $\pm$3.36} & \textbf{95.11}{\tiny $\pm$0.5} \\
                           ~ & $\mathcal{F}_T\downarrow$ & 98.15{\tiny $\pm$1.4} & N.A & 73.47{\tiny $\pm$25.06} & 65.01{\tiny $\pm$15.14} & 40.28{\tiny $\pm$18.04} & 15.37{\tiny $\pm$4.38} & 25.87{\tiny $\pm$5.68} & 27.31{\tiny $\pm$7.67} & 16.51{\tiny $\pm$1.82} & 27.14{\tiny $\pm$6.33} & 32.12{\tiny $\pm$2.95} & 25.60{\tiny $\pm$2.24} &  3.47{\tiny $\pm$2.13} & \textbf{2.56}{\tiny $\pm$1.10} \\ \midrule
    
            \multirow{2}{*}{DSA} & $\mathcal{A}_T\uparrow$ & 19.81{\tiny $\pm$4.12} & 99.53{\tiny $\pm$0.76} & 17.01{\tiny $\pm$4.33} & 35.75{\tiny $\pm$6.35} & 19.06{\tiny $\pm$4.11} & 69.50{\tiny $\pm$7.26} & 97.24{\tiny $\pm$1.43} & 98.01{\tiny $\pm$0.69} & 97.75{\tiny $\pm$1.36} & 95.97{\tiny $\pm$6.32} & 89.65{\tiny $\pm$5.51} & 91.39{\tiny $\pm$6.16} & 98.00{\tiny $\pm$3.82}& \textbf{99.30}{\tiny $\pm$0.88} \\
                           ~ & $\mathcal{F}_T\downarrow$ & 96.23{\tiny $\pm$4.95} & N.A & 87.93{\tiny $\pm$15.21} & 66.17{\tiny $\pm$14.57} & 96.85{\tiny $\pm$4.81} & 36.43{\tiny $\pm$8.64} & 3.25{\tiny $\pm$1.78} & 2.28{\tiny $\pm$0.82} & 2.62{\tiny $\pm$1.59} & 4.73{\tiny $\pm$7.55} & 12.30{\tiny $\pm$6.64} & 10.28{\tiny $\pm$7.44} & 1.22{\tiny $\pm$2.44} & \textbf{0.48}{\tiny $\pm$0.94}\\ \midrule
    
            \multirow{2}{*}{GRABMyo} & $\mathcal{A}_T\uparrow$ & 19.46{\tiny $\pm$0.34} & 93.83{\tiny $\pm$0.87} & 19.42{\tiny $\pm$0.32} & 18.15{\tiny $\pm$1.57} & 20.09{\tiny $\pm$7.28} & 20.56{\tiny $\pm$1.37} & 61.16{\tiny $\pm$3.30} & 63.78{\tiny $\pm$3.96} & 60.07{\tiny $\pm$3.69} & 57.90{\tiny $\pm$4.79} & 52.05{\tiny $\pm$5.11} & 52.84{\tiny $\pm$3.49} & 68.92{\tiny $\pm$3.53}  & \textbf{75.27}{\tiny $\pm$2.65} \\
                           ~ & $\mathcal{F}_T\downarrow$ & 94.17{\tiny $\pm$3.34} & N.A & 93.25{\tiny $\pm$6.64} & 88.34{\tiny $\pm$7.52} & 22.57{\tiny $\pm$9.52} & 94.44{\tiny $\pm$3.13}  & 40.87{\tiny $\pm$3.59} & 37.01{\tiny $\pm$3.68} & 42.46{\tiny $\pm$4.02} & 45.49{\tiny $\pm$5.48} & 52.22{\tiny $\pm$5.54} & 52.46{\tiny $\pm$4.19} & 12.74{\tiny $\pm$1.40}  & \textbf{10.28}{\tiny $\pm$1.32} \\
                           \midrule
    
            \multirow{2}{*}{WISDM} & $\mathcal{A}_T\uparrow$ & 14.60{\tiny $\pm$2.25} & 88.60{\tiny $\pm$1.94} & 13.83{\tiny $\pm$3.51} & 19.25{\tiny $\pm$6.66} & 15.59{\tiny $\pm$7.92} & 24.33{\tiny $\pm$8.12} & 66.88{\tiny $\pm$7.36} & 67.14{\tiny $\pm$5.37} & 69.67{\tiny $\pm$3.98} & 51.48{\tiny $\pm$15.85} & 44.00{\tiny $\pm$8.11} & 44.87{\tiny $\pm$3.67} & 86.59{\tiny $\pm$1.42} & \textbf{87.11}{\tiny $\pm$1.23} \\ 
                           ~ & $\mathcal{F}_T\downarrow$ & 88.47{\tiny $\pm$10.58} & N.A & 83.53{\tiny $\pm$17.51} & 74.65{\tiny $\pm$12.06} & 37.50{\tiny $\pm$14.48} & 85.78{\tiny $\pm$10.00} & 33.80{\tiny $\pm$7.90} & 33.38{\tiny $\pm$6.67} & 30.04{\tiny $\pm$3.98} & 53.04{\tiny $\pm$18.68} & 60.74{\tiny $\pm$10.12} & 52.79{\tiny $\pm$6.37} & 5.47{\tiny $\pm$0.51} & \textbf{5.52}{\tiny $\pm$1.01} \\ 
                           \midrule
        \end{NiceTabular}}
        \label{tbl:main_results_LN}

        \label{tbl:main}
    
    \end{table*}

\noindent\textbf{Comparison Baselines.}
For exemplar-free methods, we select 3 classical baselines: \textit{LwF} \cite{LwF}, \textit{MAS} \cite{MAS}, and \textit{DT$^2$W} \cite{DT2W}. For exemplar-based methods, we choose 7 up-to-date baselines: \textit{GR} \cite{GR}, \textit{ER} \cite{ER}, \textit{ICARL} \cite{ICARL}, \textit{DER} \cite{DER}, \textit{ASER} \cite{ASER}, \textit{CLPOS} \cite{CLPOS} and \textit{FASTICARL} \cite{FASTICARL}. 

\noindent\textbf{Evaluation Metrics.}
To quantitatively assess the performance of CIL methods, we utilize two widely adopted metrics: \textit{Average accuracy} and \textit{Forgetting}. \textit{Average accuracy} is calculated by averaging the accuracy of all previously encountered tasks, including the current task after learning the current task $t$. It is defined as $\mathcal{A}_t = \frac{1}{t} \sum_{i=1}^{t} \mathcal{A}_{t,i}$, where $\mathcal{A}_{t,i}$ represents the accuracy on task $i$ after learning task $t$. \textit{Forgetting} is measured to capture how much performance degrades on previous tasks after learning a new task $t$. It is computed as $\mathcal{F}_t = \frac{1}{t-1} \sum_{i=1}^{t-1} \left( \max_{j \in \{1, \dots, t-1\}} \mathcal{A}_{j,i} - \mathcal{A}_{t,i} \right)$. At task $t$, the forgetting on task $i$ is defined as the maximum difference between the highest accuracy previously achieved on task $i$ and the accuracy on task $i$ after learning task $t$.

    \noindent\textbf{Selected encoder.}
We adopt the same 1D-CNN encoder network structure as in \cite{qiao2024class}. The network consists of four convolutional blocks, each containing a 1D convolutional layer, a normalization layer, a max-pooling layer, and a dropout layer. For the GR generator, we choose TimeVAE \cite{desai2021timevae}, where both the encoder and decoder follow a four-layer Conv1D and ConvTranspose1D structure.

    \noindent\textbf{Implementation Details.} We conduct five experiments, each using different class orders and random seeds. Hyperparameters are tuned on the validation stream for each experimental run. For comparative methods, we train models using the Adam optimizer (learning rate = 0.001, batch size = 64) for 100 epochs, applying early stopping to both initial and subsequent tasks. The learning rate scheduler is treated as a tunable hyperparameter, and early stopping is employed to prevent overfitting. We set patience values to 20 for ER-based and GR-based methods and to 5 for other approaches. To facilitate early stopping, we create a validation set by splitting the training data into a 1:9 ratio and monitor validation loss.  We evaluate three learning rate scheduling strategies: Step10, Step15, and OneCycleLR. In Step10 and Step15, the learning rate is reduced by a factor of 0.1 at the 10th and 15th epochs, respectively. Dropout rates vary across datasets: 0 for UCI-HAR and UWave, and 0.3 for DSA, GRABMyo, and WISDM.  For TS-ACL, the training process for the first task is identical to other methods. After the first task, TS-ACL uses a ridge regression method, requiring only one epoch, and adds an RHL layer after the encoder while keeping the encoder unchanged. We preprocess input data using a non-trainable normalization layer before the encoder, standardizing inputs on a per-sample basis. Normalization methods differ by dataset: layer normalization for UCI-HAR, DSA, and GRABMyo; instance normalization for UWave; and no normalization for WISDM.  For exemplar-based methods, we fix the memory buffer size at 5\% of the training size in the experiment task stream. For TS-ACL, we perform a grid search for $\gamma$ over $\{1, 10, 100\}$, set the feature expansion size to 8000, and use $n=5$ RHL layers.

    \subsection{Main Results}

    \noindent\textbf{Stability.}  
    The forgetting metric results in Table~\ref{tbl:main} highlight TS-ACL’s remarkable stability. TS-ACL achieves the lowest forgetting rates across all datasets: 4.99\% on UCI-HAR, 3.47\% on UWave, 1.22\% on DSA, 12.74\% on GRABMyo, and 5.47\% on WISDM. These forgetting rates are substantially lower than those of both exemplar-free and exemplar-based methods. In comparison, the best-performing baseline methods exhibit significantly higher forgetting rates: DT$^2$W at 16.55\% on UCI-HAR, GR at 15.37\% on UWave, DER at 2.28\% on DSA, DT$^2$W at 22.57\% on GRABMyo, and Herding at 30.04\% on WISDM.

    \noindent\textbf{Plasticity.}  
    The accuracy results in Table~\ref{tbl:main} demonstrate TS-ACL's exceptional plasticity. TS-ACL achieves outstanding accuracy across all datasets: 90.89\% on UCI-HAR, 93.56\% on UWave, 98.00\% on DSA, 68.92\% on GRABMyo, and 86.59\% on WISDM. These results significantly outperform both exemplar-free and exemplar-based methods. In comparison, the second-best exemplar-free method (DT$^2$W) shows substantially lower accuracy: 80.15\% on UCI-HAR, 55.09\% on UWave, 19.06\% on DSA, 20.09\% on GRABMyo, and 15.59\% on WISDM. The performance gaps are remarkable: 10.74\% on UCI-HAR, 38.47\% on UWave, 78.94\% on DSA, 48.83\% on GRABMyo, and 71.00\% on WISDM. Even more impressively, TS-ACL-E surpasses all exemplar-based methods, further confirming its superior plasticity.

    \noindent\textbf{Robustness to Class Order.}
    We conducted five experiments with different random seeds and reported the average performance along with the standard deviation. Our method exhibited low variance across all experiments while achieving performance nearly identical to joint training on the four datasets (UCI-HAR, UWave, DSA, and WISDM). This notable phenomenon confirms the robustness and effectiveness of our approach. The experimental results demonstrate that TS-ACL can adapt to various learning scenarios, serving as a reliable and versatile solution for TSCIL.

\begin{table*}[htbp]
    \centering
    \caption{Ablation study results showing the impact of different components. VR (Variance Ratio) is defined as the ratio of inter-class variance to intra-class variance, representing the separability between different categories. The best performance for each dataset and metric is highlighted in \textbf{bold}.}
    \label{tab:ablation}
    \begin{tabular}{lcc|cc|cc|cc|cc}
    \toprule
    \multirow{2}{*}{Components} & \multicolumn{2}{c|}{UCI-HAR} & \multicolumn{2}{c|}{UWave} & \multicolumn{2}{c|}{DSA} & \multicolumn{2}{c|}{GRABMyo} & \multicolumn{2}{c}{WISDM} \\
    \cline{2-11}
     & $\mathcal{A}_T\uparrow$  & $\text{VR}\uparrow$ & $\mathcal{A}_T\uparrow$  & $\text{VR}\uparrow$ & $\mathcal{A}_T\uparrow$  & $\text{VR}\uparrow$ & $\mathcal{A}_T\uparrow$  & $\text{VR}\uparrow$ & $\mathcal{A}_T\uparrow$  & $\text{VR}\uparrow$ \\
    \midrule
    Deep                & 86.18 & 0.59 & 87.54 & 0.85 & 91.65 & 0.26 & 40.85 & 6.83 & 48.98 & 6.98 \\
    Deep+RHL         & 87.17 & 0.70 & 89.86 & 1.02 & 96.44 & 0.32 & 56.43 & 8.35 & 82.32 & 7.71 \\
    Deep+RHL+Fusion  & \textbf{90.89} & \textbf{0.83} & \textbf{93.56} & \textbf{1.71} & \textbf{98.00} & \textbf{0.40} & \textbf{68.92} & \textbf{11.37} & \textbf{86.59} & \textbf{10.41} \\
    \bottomrule
    \end{tabular}
\end{table*}
    
    \noindent\textbf{Why TS-ACL Has Stability and Plasticity?}
    The remarkable stability and plasticity of TS-ACL stem from its theoretical guarantee of achieving results equivalent to joint learning in incremental scenarios, as discussed in Sec \ref{AL-based class incremental learning}. Specifically, by transforming each update into a gradient-free analytical learning process with a closed-form solution, TS-ACL fundamentally addresses the catastrophic forgetting issue prevalent in gradient-based methods. Meanwhile, the joint learning effect addresses intra-class variations in TSCIL \cite{qiao2024class} by learning a global distribution. The theoretical foundation ensures that TS-ACL can maintain high accuracy while minimizing forgetting. However, freezing the encoder would limit the model’s plasticity, so we introduced RHL and multi-scale feature strategies to enhance plasticity, thereby excelling in both stability and plasticity metrics.
    
    \noindent\textbf{Computational Efficiency.}
    As shown in Figure~\ref{fig:time_comparison}, TS-ACL demonstrates significant computational efficiency advantages compared to other methods. The runtime comparison across five datasets indicates that TS-ACL consistently requires far less computational time than both exemplar-based and exemplar-free methods. This efficiency can be attributed to the analytical learning approach adopted by TS-ACL, which eliminates the need for iterative gradient updates. Specifically, as analyzed in Sec \ref{efficiency}, our method updates the classification head parameters solely through matrix multiplication and matrix inversion, which can be efficiently parallelized on GPU. Moreover, while other methods require multiple rounds of training for each new task, TS-ACL completes training with just a single analytical update, significantly reducing training time. This computational advantage, combined with its outstanding accuracy and extremely low forgetting rate, makes TS-ACL particularly suitable for real-world applications, especially in resource-constrained environments (e.g., edge computing).

    \subsection{Ablation Studies}
    \noindent\textbf{Effect of  Embedding Strategy.}
    We conduct ablation studies to analyze the effectiveness of each embedding strategy in TS-ACL. Table~\ref{tab:ablation} shows the performance with different combinations of three key components: last layer feature  (Deep), RHL layer, and  multi-scale feature fusion (Fusion). The base model with only  last layer feature achieves the lowest performance (e.g., 86.18\% on UCI-HAR). Adding RHL layer brings consistent improvements across all datasets (e.g., +15.58\% on GRABMyo), with increased variance ratios indicating enhanced feature stability. The complete model with feature fusion further boosts performance significantly, achieving the best results on all datasets (e.g., 98.00\% on DSA) with substantially improved feature discrimination (e.g., variance ratio from 8.35 to 11.37 on GRABMyo). The progressive improvements in both accuracy and variance ratios validate the effectiveness of our design.

    \noindent\textbf{Effect of Ensemble Learning.}
    We also evaluated the effectiveness of the ensemble approach (TS-ACL-E) across different datasets. As shown in Table~\ref{tbl:main}, ensemble learning further improved performance on four datasets: UWave (+1.55\%), DSA (+1.30\%), GRABMyo (+6.35\%), and WISDM (+0.52\%). By aggregating multiple models with different random initializations, the ensemble method better captures diverse temporal patterns within the same class. We observed a slight performance decrease only on UCI-HAR (-2.22\%), likely due to the dataset’s relatively simple structure, where ensemble diversity may introduce unnecessary complexity. Overall, these results demonstrate that our ensemble strategy effectively enhances robustness against intra-class variations.

    \section{Discussion}
    While the TS-ACL  has demonstrated promising results in addressing class-incremental learning  for time series, there are specific conditions where its effectiveness might be limited. This section outlines potential failure modes and challenges in real-world applications.

\subsection{Scalability Challenges in Large Datasets}
While TS-ACL demonstrates excellent performance across the tested datasets, its scalability to extremely large datasets may face challenges. The method relies on a pre-trained encoder that is frozen after the initial task. For very complex or diverse data distributions, this frozen encoder might not capture sufficiently rich features to support effective classification across all future tasks.  Nevertheless, this limitation could be mitigated by leveraging more robust pre-trained backbone networks, such as Timer \cite{liu2024timergenerativepretrainedtransformers}, MOIRAI \cite{woo2024unifiedtraininguniversaltime}, and Moment \cite{goswami2024momentfamilyopentimeseries}, which have been pre-trained on large-scale time series datasets.

\subsection{Application to Real-Time Systems}
TS-ACL is particularly well-suited for real-time systems and edge computing scenarios due to several advantages. First, its analytical learning approach requires only a single update per task, significantly reducing computational demands compared to gradient-based methods that require multiple epochs as demonstrated in our experiments. Second, the method's minimal memory footprint (no need to store historical data) makes it ideal for resource-constrained devices. Third, the deterministic nature of the analytical solution ensures consistent performance without the variability often seen in stochastic gradient-based approaches. These characteristics make TS-ACL an excellent choice for applications such as wearable health monitoring, industrial IoT systems, and autonomous vehicles, where continuous learning from streaming time series data must occur with limited computational resources and strict latency requirements.
    
\section{Conclusion}
The proposed TS-ACL method tackles the core challenges of catastrophic forgetting and intra-class variations in time series class-incremental learning (TSCIL). It achieves this by employing a gradient-free, closed-form solution learning strategy integrated with multi-scale feature fusion and high-dimensional mapping techniques. Notably, our approach delivers superior performance without requiring storage of past data (exemplar-free). Experimental evaluations on five benchmark datasets confirm that TS-ACL surpasses both exemplar-based and exemplar-free methods, maintaining minimal forgetting while achieving rapid training. Both theoretical analysis and empirical results robustly validate the superiority of our method. Its computational efficiency and storage-free nature establish TS-ACL as a highly effective and practical continual learning solution for time series data in real-world scenarios.


\begingroup
\small

\bibliographystyle{IEEEtran}
\bibliography{ref}
\endgroup

\onecolumn

\appendices

\section{Proof of Theorem}
\label{app:proof}

\noindent\textbf{Theorem 1:}
The weights $\mathbf{\Phi}$, recursively obtained by
\begin{equation}
\begin{aligned}
\hat{\mathbf{\Phi}}_t = \left[ \hat{\mathbf{\Phi}}_{t-1} - \mathbf{\Psi}_{t} \mathbf{U}^{\text{E}}_t \mathbf{U}^{(\text{E})\top}_t \hat{\mathbf{\Phi}}_{t-1} \quad \mathbf{\Psi}_t \mathbf{U}^{(\text{E})\top}_t \mathbf{V}^{\text{train}}_t \right]
\end{aligned}
\end{equation}
are equivalent to those obtained from Eq. (9) for task $ t $. The matrix $ \mathbf{\Psi}_t $ can also be recursively updated by
\begin{equation}
\begin{aligned}
\mathbf{\Psi}_t = \mathbf{\Psi}_{t-1} - \mathbf{\Psi}_{t-1} \mathbf{U}^{\text{E}}_t \left( \mathbf{I} + \mathbf{U}^{(\text{E})\top}_t \mathbf{\Psi}_{t-1} \mathbf{U}^{\text{E}}_t \right)^{-1} \mathbf{U}^{(\text{E})\top}_t \mathbf{\Psi}_{t-1}.
\end{aligned}
\end{equation}


\noindent\textit{\textbf{Proof.}} 
For task $ t $, the weights $\mathbf{\Phi}$ obtained from Eq. (9) can be expressed as: 

\begin{equation}
\label{joint_solution}
    \hat{\mathbf{\Phi}}_{t} = 
    \left(\mathbf{U}^{(\text{E})\top}_{1:t} \mathbf{U}_{1:t}^{(\text{E})} + \gamma \mathbf{I}\right)^{-1} \mathbf{U}^{(\text{E})\top}_{1:t} \mathbf{V}_{1:t}.
\end{equation}

\noindent By decoupling the $t$-th task from previous tasks, $\hat{\mathbf{\Phi}}_{t}$ can be written as:

\begin{equation}
\label{eqn:decouple}
\begin{aligned}
\hat{\mathbf{\Phi}}_{t} 
&= \left(
\begin{bmatrix}
    {\mathbf{U}^{(\text{E})\top}_{1:t-1}} & {\mathbf{U}^{(\text{E})\top}_t}
\end{bmatrix}
\begin{bmatrix}
    {\left(\mathbf{U}^{(\text{E})}_{1:t-1}\right)} \\
    {\left(\mathbf{U}^{\text{E}}_t\right)}
\end{bmatrix}
 + \gamma \mathbf{I}\right)^{-1}
 \begin{bmatrix}
    {\mathbf{U}^{(\text{E})\top}_{1:t-1}}  & {\mathbf{U}^{(\text{E})\top}_t}
\end{bmatrix}
\begin{bmatrix}
    \mathbf{V}_{1:t-1}^\text{train} & \mathbf{0} \\
    \mathbf{0} & \mathbf{V}^{\text{train}}_t
\end{bmatrix} \\
            &=
\left({\mathbf{U}^{(\text{E})\top}_{1:t-1}}      
      {\mathbf{U}^{(\text{E})}_{1:t-1}}  
      + \gamma \mathbf{I}
      +{\mathbf{U}^{(\text{E})\top}_t}{\mathbf{U}^{\text{E}}_t}\right)^{-1}
\begin{bmatrix}
    {\mathbf{U}^{(\text{E})\top}_{1:t-1}} \mathbf{V}_{1:t-1}^\text{train} & {\mathbf{U}^{(\text{E})\top}_t}
    \mathbf{V}^{\text{train}}_t\\
\end{bmatrix}.
\end{aligned}
\end{equation}

\noindent 
According to Eq. (10), the aggregated temporal inverse correlation matrix for task $ t $ can be expressed as
\begin{equation}
    \label{eqn:M_p}
    \mathbf{\Psi}_{t} = 
    \left(\mathbf{U}^{(\text{E})\top}_{1:t} \mathbf{U}_{1:t}^{(\text{E})} + \gamma \mathbf{I}\right)^{-1} = 
    \left({\mathbf{U}^{(\text{E})\top}_{1:t-1}}      
      {\mathbf{U}^{(\text{E})}_{1:t-1}}  
      + \gamma \mathbf{I}
      +{\mathbf{U}^{(\text{E})\top}_t}{\mathbf{U}^{\text{E}}_t}\right)^{-1} =
      \left( {\mathbf{\Psi}_{t-1}}^{-1}
      +{\mathbf{U}^{(\text{E})\top}_t}{\mathbf{U}^{\text{E}}_t}\right)^{-1}.
\end{equation}

\noindent 
By applying the Woodbury matrix identity, where  $\left(\mathbf{A} + \mathbf{UCV}\right)^{-1} = \mathbf{A}^{-1} - \mathbf{A}^{-1} \mathbf{U} \left(\mathbf{C}^{-1} + \mathbf{V} \mathbf{A}^{-1} \mathbf{U}\right)^{-1} \mathbf{V} \mathbf{A}^{-1}$ (which will be proved in Appendix \ref{woodbury}), and treating $\mathbf{\Psi}_{t-1}$ as $\mathbf{A}^{-1}$, the memory matrix for task $ t $ can be further represented as:
\begin{equation}
\label{eqn:memory_p}
\mathbf{\Psi}_{t} =
 \mathbf{\Psi}_{t-1} - \mathbf{\Psi}_{t-1} \mathbf{U}^{(\text{E})\top}_t \left(\mathbf{I} + \mathbf{U}^{\text{E}}_t \mathbf{\Psi}_{t-1} \mathbf{U}^{(\text{E})\top}_t\right)^{-1} \mathbf{U}^{\text{E}}_t \mathbf{\Psi}_{t-1}.
\end{equation}

\noindent Thus, the weights $\hat{\mathbf{\Phi}}_{t}$ can be derived as
\begin{equation}
\label{eqn:w_n_derive}
    \hat{\mathbf{\Phi}}_{t} = 
    \begin{bmatrix}
    \mathbf{\Psi}_{t} {\mathbf{U}^{(\text{E})\top}_{1:t-1}} \mathbf{V}_{1:t-1}^\text{train} &
    \mathbf{\Psi}_{t} {\mathbf{U}^{(\text{E})\top}_t} \mathbf{V}^{\text{train}}_t
    \end{bmatrix}.
\end{equation}

\noindent Denote the left submatrix $\mathbf{\Psi}_{t} \mathbf{U}^{(\text{E})\top}_{1:t-1} \mathbf{V}_{1:t-1}^\text{train}$ as $\mathbf{H}$. By substituting Eq. (\ref{eqn:memory_p}) into Eq. (\ref{eqn:w_n_derive}),
\begin{equation}
    \mathbf{H} = 
    \hat{\mathbf{\Phi}}_{t-1} - \mathbf{\Psi}_{t-1} \mathbf{U}^{(\text{E})\top}_t \left(\mathbf{I} + \mathbf{U}^{\text{E}}_t \mathbf{\Psi}_{t-1} \mathbf{U}^{(\text{E})\top}_t\right)^{-1}
   \mathbf{U}^{\text{E}}_t \hat{\mathbf{\Phi}}_{t-1}.
\end{equation}
     
\noindent Let $\mathbf{P} = \mathbf{U}^{\text{E}}_t \mathbf{\Psi}_{t-1} \mathbf{U}^{(\text{E})\top}_t$. 
Based on the identity $\left(\mathbf{I} + \mathbf{P}\right)^{-1}=\mathbf{I} - \left(\mathbf{I} + \mathbf{P}\right)^{-1}\mathbf{P}$, $\mathbf{H}$ can be further derived as:
\begin{equation}
\begin{aligned}
        \mathbf{H} &= \hat{\mathbf{\Phi}}_{t-1} - \mathbf{\Psi}_{t-1} \mathbf{U}^{(\text{E})\top}_t [\mathbf{I} - \left(\mathbf{I} + \mathbf{U}^{\text{E}}_t \mathbf{\Psi}_{t-1} \mathbf{U}^{(\text{E})\top}_t\right)^{-1}\mathbf{P}]
   \mathbf{U}^{\text{E}}_t \hat{\mathbf{\Phi}}_{t-1} \\
    &= \hat{\mathbf{\Phi}}_{t-1} - [\mathbf{\Psi}_{t-1} \mathbf{U}^{(\text{E})\top}_t  + \mathbf{\Psi}_{t-1} \mathbf{U}^{(\text{E})\top}_t \left(\mathbf{I} + \mathbf{U}^{\text{E}}_t \mathbf{\Psi}_{t-1} \mathbf{U}^{(\text{E})\top}_t\right)^{-1}\mathbf{P}] \mathbf{U}^{\text{E}}_t \hat{\mathbf{\Phi}}_{t-1} \\
    &= \hat{\mathbf{\Phi}}_{t-1} - [\mathbf{\Psi}_{t-1} \mathbf{U}^{(\text{E})\top}_t  + \mathbf{\Psi}_{t-1} \mathbf{U}^{(\text{E})\top}_t \left(\mathbf{I} + \mathbf{U}^{\text{E}}_t \mathbf{\Psi}_{t-1} \mathbf{U}^{(\text{E})\top}_t\right)^{-1} \mathbf{U}^{\text{E}}_t \mathbf{\Psi}_{t-1} \mathbf{U}^{(\text{E})\top}_t] \mathbf{U}^{\text{E}}_t \hat{\mathbf{\Phi}}_{t-1} \\
    &= \hat{\mathbf{\Phi}}_{t-1} - [\mathbf{\Psi}_{t-1} + \mathbf{\Psi}_{t-1} \mathbf{U}^{(\text{E})\top}_t \left(\mathbf{I} + \mathbf{U}^{\text{E}}_t \mathbf{\Psi}_{t-1} \mathbf{U}^{(\text{E})\top}_t\right)^{-1} \mathbf{U}^{\text{E}}_t \mathbf{\Psi}_{t-1}] \mathbf{U}^{(\text{E})\top}_t \mathbf{U}^{\text{E}}_t \hat{\mathbf{\Phi}}_{t-1} \\
    &= \hat{\mathbf{\Phi}}_{t-1} - \mathbf{\Psi}_{t}\mathbf{U}^{(\text{E})\top}_t
    \mathbf{U}^{\text{E}}_t \hat{\mathbf{\Phi}}_{t-1}. 
\end{aligned}
\end{equation}

\noindent Thus, 
\begin{equation}
    \hat{\mathbf{\Phi}}_{t} = 
\begin{bmatrix}
    {\hat{\mathbf{\Phi}}_{t-1} - \mathbf{\Psi}_{t} \mathbf{U}^{(\text{E})\top}_t \mathbf{U}^{\text{E}}_t \hat{\mathbf{\Phi}}_{t-1}} & \mathbf{\Psi}_{t} \mathbf{U}^{(\text{E})\top}_t \mathbf{V}^{\text{train}}_t \\
\end{bmatrix}.
\end{equation}

\noindent Theorem 1 is thus proved.
$\hfill\blacksquare$

\section{Proof of Woodbury matrix identity}
\label{woodbury}
\noindent\textbf{Woodbury matrix identity:} For matrices $\mathbf A\in \mathbb{R}^{n\times n}$, $\mathbf U \in \mathbb{R}^{n\times m}$, $\mathbf C\in \mathbb{R}^{m\times m}$, and $\mathbf V\in \mathbb{R}^{n\times n}$, if $\mathbf A$ and $\mathbf C$ are both reversible, we can derive the following expression:
    \begin{equation}
    (\mathbf A+ \mathbf U \mathbf C \mathbf V)^{-1}=\mathbf{A}^{-1}-\mathbf{A}^{-1}\mathbf{U}(\mathbf{C}^{-1}+\mathbf{V}\mathbf{A}^{-1}\mathbf{U})^{-1}\mathbf{V}\mathbf{A}^{-1}.
\end{equation}

\noindent\textit{\textbf{Proof.}} 
First, by multiplying $ \mathbf{A} + \mathbf{U}\mathbf{C}\mathbf{V} $ on the right by $ \mathbf{A}^{-1} $, we have:
\begin{equation}
(\mathbf{A} + \mathbf{U}\mathbf{C}\mathbf{V})\mathbf{A}^{-1} = \mathbf{I} + \mathbf{U}\mathbf{C}\mathbf{V}\mathbf{A}^{-1}.
\end{equation}

\noindent Subsequently, we multiply it on the right by $ \mathbf{U} $:
\begin{equation}
(\mathbf{I} + \mathbf{U}\mathbf{C}\mathbf{V}\mathbf{A}^{-1})\mathbf{U} = \mathbf{U} + \mathbf{U}\mathbf{C}\mathbf{V}\mathbf{A}^{-1}\mathbf{U}.
\end{equation}

\noindent Since $ \mathbf{C} $ is invertible, we have:
\begin{equation}
(\mathbf{A} + \mathbf{U}\mathbf{C}\mathbf{V})\mathbf{A}^{-1}\mathbf{U} = \mathbf{U}\mathbf{C}(\mathbf{C}^{-1} + \mathbf{V}\mathbf{A}^{-1}\mathbf{U}).
\end{equation}

\noindent Since $ \mathbf{C}^{-1} + \mathbf{V}\mathbf{A}^{-1}\mathbf{U} $ is invertible, we obtain:
\begin{equation}
\mathbf{U}\mathbf{C} = (\mathbf{A} + \mathbf{U}\mathbf{C}\mathbf{V})\mathbf{A}^{-1}\mathbf{U}(\mathbf{C}^{-1} + \mathbf{V}\mathbf{A}^{-1}\mathbf{U})^{-1}.
\end{equation}

\noindent 
To connect it with the left-hand side of the Woodbury matrix identity, we transform $ \mathbf{U}\mathbf{C} $ into $ \mathbf{A} + \mathbf{U}\mathbf{C}\mathbf{V} $:
\begin{equation}
\mathbf{A} + \mathbf{U}\mathbf{C}\mathbf{V} = \mathbf{A} + (\mathbf{A} + \mathbf{U}\mathbf{C}\mathbf{V})\mathbf{A}^{-1}\mathbf{U}(\mathbf{C}^{-1} + \mathbf{V}\mathbf{A}^{-1}\mathbf{U})^{-1}\mathbf{V}.
\end{equation}

\noindent By multiplying it on the right by $ \mathbf{A}^{-1} $, we can obtain:
\begin{equation}
(\mathbf{A} + \mathbf{U}\mathbf{C}\mathbf{V})\mathbf{A}^{-1} = \mathbf{I} + (\mathbf{A} + \mathbf{U}\mathbf{C}\mathbf{V})\mathbf{A}^{-1}\mathbf{U}(\mathbf{C}^{-1} + \mathbf{V}\mathbf{A}^{-1}\mathbf{U})^{-1}\mathbf{V}\mathbf{A}^{-1}.
\end{equation}

\noindent Then we multiply on the left by $ (\mathbf{A} + \mathbf{U}\mathbf{C}\mathbf{V})^{-1} $:
\begin{equation}
\mathbf{A}^{-1} = (\mathbf{A} + \mathbf{U}\mathbf{C}\mathbf{V})^{-1} + \mathbf{A}^{-1}\mathbf{U}(\mathbf{C}^{-1} + \mathbf{V}\mathbf{A}^{-1}\mathbf{U})^{-1}\mathbf{V}\mathbf{A}^{-1}.
\end{equation}

\noindent Rearranging the terms, we can finally obtain:
\begin{equation}
(\mathbf{A} + \mathbf{U}\mathbf{C}\mathbf{V})^{-1} = \mathbf{A}^{-1} - \mathbf{A}^{-1}\mathbf{U}(\mathbf{C}^{-1} + \mathbf{V}\mathbf{A}^{-1}\mathbf{U})^{-1}\mathbf{V}\mathbf{A}^{-1}.
\end{equation}

\noindent This completes the proof.
$\hfill\blacksquare$

\end{document}